\documentclass{article}

\usepackage{bm}
\usepackage{dsfont}
\usepackage{mathtools}
\usepackage{thmtools}
\usepackage{thm-restate}
\usepackage{tikz}
\usepackage{booktabs}       
\usepackage{amsfonts}       
\usepackage{nicefrac}       
\usepackage{bbding}
\usepackage{multicol}
\usepackage{multirow}
\usepackage{courier}
\usepackage{mathrsfs}
\usepackage{setspace}
\usepackage{tikz}

\usepackage{makecell}
\usepackage{bm}
\usepackage{appendix}
\usepackage{comment}
\usepackage{tablefootnote}
\usepackage{multirow}
\usepackage{hhline}%
\usepackage[font=footnotesize,labelfont=bf]{caption}
\usepackage{cases}
\usepackage{amssymb}

\usepackage{enumitem}

\newtheorem{lemma}{Lemma}
\newtheorem{proof}{Proof}

\usepackage{algorithm}
\usepackage{arydshln}

\usepackage[most]{tcolorbox}
\usepackage{fancyvrb}  
\usepackage{subcaption}

\newcommand{\am}{\mathop{\text{argmax}}}
\newcommand{\amin}{\mathop{\text{argmin}}}

\newcommand{\indicator}[1]{\mathds{1}[#1]}

 \usepackage[dblblindworkshop, final]{neurips_2026}
\workshoptitle{VLM4RWD}

\usepackage[utf8]{inputenc} 
\usepackage[T1]{fontenc}    
\usepackage{hyperref}       
\usepackage{url}            
\usepackage{booktabs}       
\usepackage{amsfonts}       
\usepackage{nicefrac}       
\usepackage{microtype}      
\usepackage{xcolor}         

\title{Efficient Omni-modal Large Language Model Inference via Per-Query Modality Selection}

\author{
  Shuqi Liu \quad Yuzhou Cao \quad Luke Ong \\
  College of Computing and Data Science, Nanyang Technological University \\
  \texttt{\{shuqi005,yuzhou002\}@e.ntu.edu.sg \quad
   luke.ong@ntu.edu.sg} \\
}

\begin{document}
\maketitle

\begin{abstract}
Omni-modal large language models (Omni-LLMs) can jointly process a broad range of modalities, such as text, audio, and video, enabling stronger multimodal understanding and reasoning capabilities, but often at substantial inference cost.
We observe that the benefit of additional modalities varies across queries, indicating that a fixed modality configuration can be inefficient.
Motivated by this, we study query-wise modality selection, which keeps the underlying Omni-LLM fixed while adaptively selecting which modalities to provide for each query.
We formulate the problem as an accuracy--cost optimization problem and develop a cost-sensitive learning approach for modality selection. 
We evaluate our approach on WorldSense using Qwen2.5-Omni and Qwen3-Omni under three inference-cost measures: input tokens, latency, and monetary cost. 
Across both models, query-wise modality selection consistently improves the accuracy--cost trade-off and can even outperform 
the standard setting in which all available modalities are provided to the Omni-LLM.
For Qwen2.5-Omni, our cost-sensitive approach matches or exceeds this full-modality accuracy while requiring only approximately 41\% of the input tokens, 39\% of the latency, and 61\% of the monetary cost.
\end{abstract}

\section{Introduction}
Multimodal Large Language Models (MLLMs) \cite{alayrac2022flamingo,liu2023visual,dai2023instructblip,qwen2.5-vl} have evolved from systems designed for specific modality combinations, such as text--image or text--video, to Omni-LLMs \cite{jiang-etal-2025-specific,hong2025worldsense,zhou2025daily,unobench,qwen25,Qwen3-Omni} that can jointly process text, images, audio, and video.
By jointly leveraging complementary information across these modalities, Omni-LLMs can improve multimodal understanding and reasoning, but processing a broader set of modalities also incurs greater inference overhead \cite{ding2026omnisift,DBLP:conf/cvpr/FuDLLRZWZSZCLLZ25,qwen25,Qwen3-Omni}.

To reduce this cost, a line of studies has explored improving Omni-LLM efficiency by pruning, compressing, or reallocating multimodal tokens \cite{yang2026omniselect,huang2026omnidelta,ding2026omnisift,tao2026omnizip,xin2026stage}. 
However, these methods still generally assume that all available modalities are provided to the model, and therefore cannot fully avoid the computational overhead of processing long audio and video inputs. 
This leaves another opportunity for reducing inference cost largely unexplored. 
Some queries may already be answered correctly using only a subset of the available modalities, such that the remaining audio or video inputs provide little or no additional gain. 
As long as these modalities are still processed by the Omni-LLM, however, their associated computational cost remains.
A more direct solution is therefore to identify unnecessary modalities and omit them from the model input entirely.
Unlike reducing the number of tokens after a modality is included, such omission can completely eliminate the Omni-LLM processing cost associated with that modality, without degrading model performance.
To investigate whether such a reduction is possible, we first ask a more fundamental question: \textbf{does every query require all available modalities to obtain a correct answer from an Omni-LLM?}

Our empirical results give a negative answer to the question: the modalities required to answer a query can vary substantially across queries.
Figure~\ref{fig: two examples} illustrates this behavior with two examples from WorldSense~\cite{hong2025worldsense}.
In Figure~\ref{audiosufficienr}, audio alone provides sufficient evidence to answer the question, while adding video changes Qwen2.5-Omni's \cite{qwen25} correct prediction to an incorrect one. 
This suggests that additional modalities can sometimes be unnecessary and may even be misleading during inference.
In contrast, the question in Figure~\ref{videorequired} asks for the color of a door, which cannot be resolved without the visual information.
Together, these examples show that the utility of video can vary considerably across queries: it may be unnecessary or even detrimental for some queries, while essential for others.
This query-wise variation is particularly important, since processing video incurs significant additional cost. On WorldSense, Qwen2.5-Omni processes about 3,000 input tokens per query on average with text and audio, whereas including video increases the average input size to about 40,000 tokens, which is more than a 13$\times$ increase.
These observations motivate \textit{per-query modality selection}: using additional modalities only when their benefit justifies the inference cost.

Motivated by these observations, we aim to adaptively determine which modalities should be used for each query to achieve a better trade-off between prediction accuracy and inference cost. As illustrated in Figure~\ref{fig: fixed}, using a fixed modality configuration leads to different operating points: Text+Audio (TA) incurs lower inference cost but achieves lower accuracy, whereas Text+Audio+Video (TAV) improves accuracy at a substantially higher cost. 
Since video is unnecessary for some queries and can even hurt prediction, selectively including it only when beneficial offers the potential to reduce inference cost with little or no loss in accuracy, and in some cases even improve prediction performance. 
We therefore study per-query modality selection, which dynamically selects the modality configuration for each query before invoking the Omni-LLM.

In this work, we study query-wise modality selection for efficient Omni-LLM inference. As shown in Figure~\ref{fig: selection}, our goal is to select the modality configuration for each query by balancing the potential performance gain from adding costly modalities, such as video, against the resulting inference cost.
We formulate this objective as a constrained accuracy--cost optimization problem and derive a cost-sensitive learning approach through its Lagrangian relaxation. 
Empirical results show that our modality-selection approach can match the accuracy of full-modality inference at substantially lower cost, demonstrating the potential of adaptive modality use for more efficient Omni-LLM inference.

\begin{figure}
 \begin{subfigure}[t]{0.5\linewidth}
        \centering
        \includegraphics[width=\linewidth]{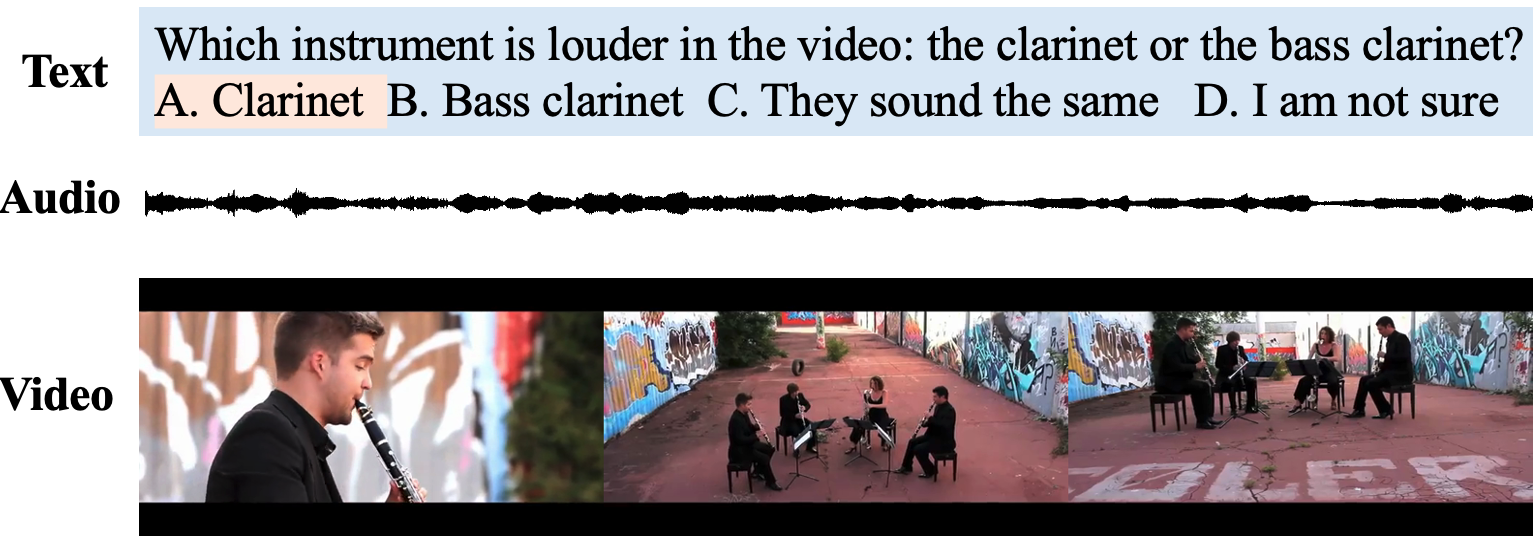}
        \caption{Audio-sufficient Query: TA$\checkmark$ TAV$\times$}
        \label{audiosufficienr}
\end{subfigure}
 \begin{subfigure}[t]{0.5\linewidth}
        \centering
        \includegraphics[width=\linewidth]{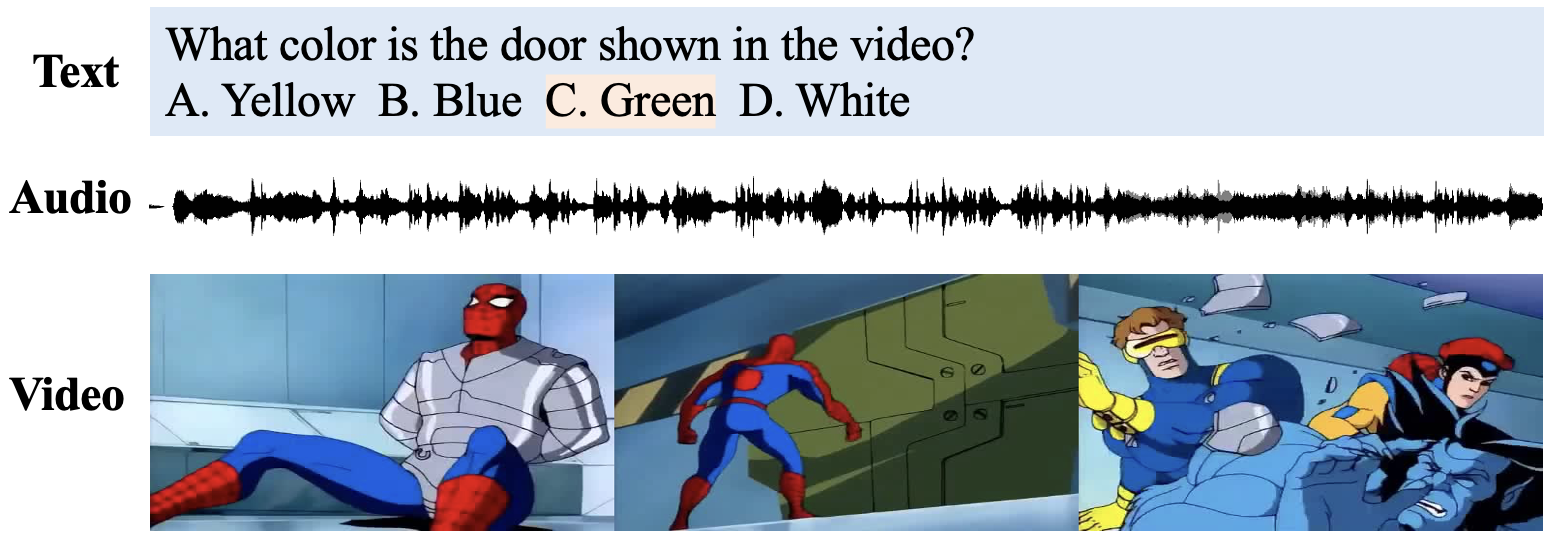}
        \caption{Video-required Query: TA$\times$ TAV$\checkmark$}
        \label{videorequired}
\end{subfigure}
    \caption{Examples from WorldSense illustrating query-dependent modality requirements. 
    Checkmarks $\checkmark$ and crosses $\times$ indicate prediction correctness using Qwen2.5-Omni.}
    \label{fig: two examples}
    \vspace{-7pt} 
\end{figure}

\begin{figure}
 \begin{subfigure}[t]{0.4\linewidth}
        \centering
       \includegraphics[width=\linewidth]{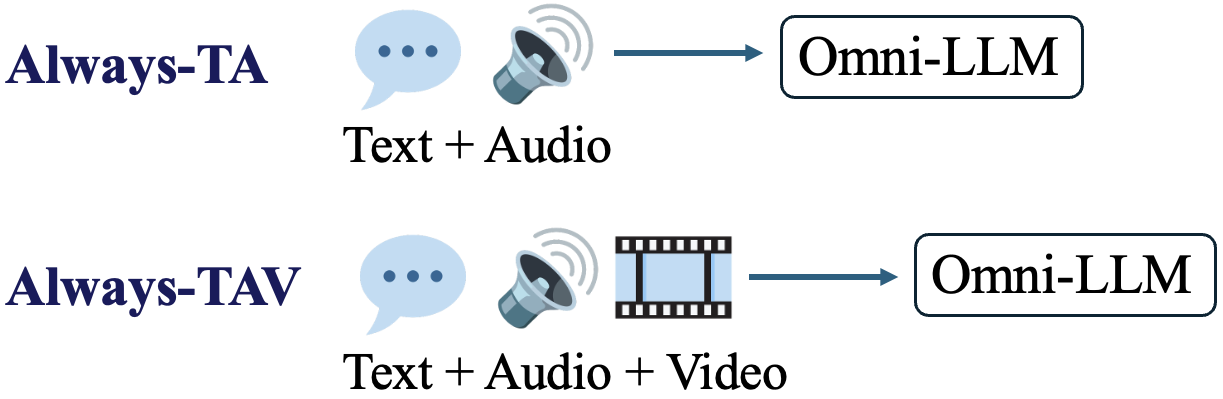}
        \caption{Fixed Modality Configurations}
        \label{fig: fixed}
\end{subfigure}
\hfill
 \begin{subfigure}[t]{0.57\linewidth}
        \centering
        \includegraphics[width=\linewidth]{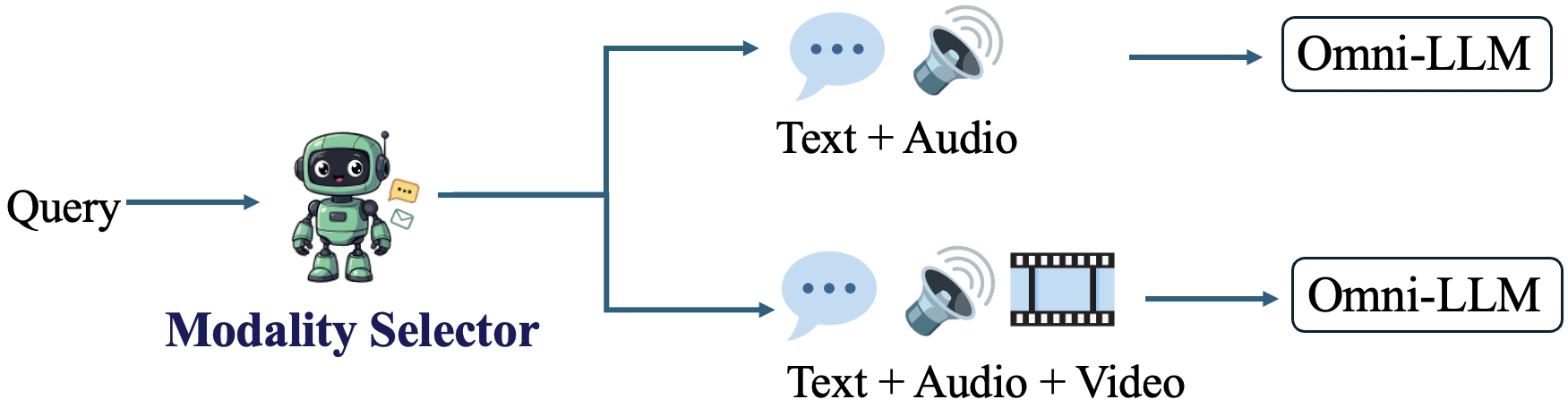}
        \caption{Per-query modality selection}
        \label{fig: selection}
\end{subfigure}
    \caption{Comparison between fixed modality configurations and per-query modality selection. (a) In fixed-modality inference, a single input configuration is used for every query, either Text+Audio (TA) or Text+Audio+Video (TAV). (b) In our framework, a modality selector determines for each query whether to use TA or TAV before invoking the Omni-LLM for prediction.}
    \label{fig: fixed vs selection}
    \vspace{-8pt} 
\end{figure}

To summarize, our contributions are:
\begin{itemize}
\vspace{-2pt}
    \item We formulate query-wise modality selection for Omni-LLM inference as an accuracy--cost optimization problem, where the modality configuration is adapted to each query.
    \item We develop a cost-sensitive modality selector derived from this formulation, and adapt representative LLM routing and cascading strategies to establish baselines for query-wise modality selection.
    \item We evaluate our methods on WorldSense with Qwen2.5-Omni and Qwen3-Omni, showing consistent accuracy--cost improvements over fixed-modality and adaptive baselines.
\end{itemize}

\section{Related Work}
\paragraph{Token Compression for Omni-LLMs.}
MLLMs process large numbers of visual tokens, motivating extensive work on token selection and pruning for efficient inference \cite{yang2025visionzip,liu-etal-2025-video,yao2025timechat,tao2025dycoke,shang2025llava,zhang2024sparsevlm}.
Omni-LLMs extend multimodal understanding to diverse inputs such as image, audio, and video \cite{qwen25,Qwen3-Omni,jiang-etal-2025-specific}. For audio--visual inputs, previous studies show that effective compression must account for modality-specific information distributions while preserving cross-modal correspondence~\cite{cao2026omnifocus,su2026omniscope}.
Recent work therefore develops audio--visual compression methods for Omni-LLMs, ranging from token pruning and adaptive budget allocation~\cite{yang2026omniselect,huang2026omnidelta,ding2026omnisift,tao2026omnizip,zhong2026omni,yoo2026out} to layer-wise and hybrid compression within the LLM~\cite{xin2026stage,park2026omnidrop,su2026omnipack,lee2026deferred}.

\paragraph{Modality Selection.}
Prior work studies modality selection across different modalities and settings.
\cite{JMLR:v25:23-0439} selects a fixed subset from available modalities under a cardinality constraint and uses the same subset across samples.
Input-dependent methods instead decide which modalities to process for each input; in video recognition, \cite{panda2021adamml,weng2023hcms} dynamically activate RGB/appearance, motion, and audio streams to reduce computation.
At the system-serving level, \cite{hu-etal-2024-mosel} selects modality configurations based on request-level accuracy and latency requirements together with system load.
Recently, \cite{luo2026chain} studies adaptive modality use in Omni-LLMs through agentic orchestration of modalities, input topology, and reasoning pathways, with the goal of improving multimodal reasoning robustness.

\paragraph{LLM Routing and Cascade.}
LLMs vary in capability and inference cost: stronger models often perform better but are also more expensive.
This trade-off motivates adaptive model selection, routing easier queries to cheaper models and invoking stronger ones only when needed \cite{chen2023frugalgpt,ong2025routellm}.
The idea of adapting system action to each input is closely related to learning to defer \cite{madras,DBLP:conf/icml/MozannarS20,gagan,liu2024mitigating,liu2026more,Okati2,DBLP:conf/icml/VermaN22,DBLP:conf/icml/CharusaieMSS22,DBLP:conf/aistats/VermaBN23,DBLP:conf/aistats/MozannarLWSDS23,mao2024realizable,wei2024exploiting,DBLP:conf/aistats/TailorPVMN24,charusaie2024unifying,palomba2024causal,DBLP:conf/icml/MontreuilCNO25, tian2026deferredseg,pesenti2026too,pugnana2026deferring,giacchetta2026should,zhu7292758coverage,soen2026density,montreuil2026beyond,mestoukirdi2026expected} and learning with rejection \cite{chow,bartlettrej,cortes2,charoenphakdee,DBLP:conf/nips/CaoC0GG00S22,DBLP:conf/nips/CaoM0W023,mao2023ranking,narasimhan2024learning,narasimhan2024plugin,cao2026establishing} in machine learning, where uncertain predictions are deferred to human experts or rejected, respectively.
LLM model selection is an input-dependent choice among candidate LLMs, typically via routing \cite{ding2024hybrid,routerbench2024,ding2025bestroute,song-etal-2025-irt,somerstep2025carrot,jitkrittum2026universal,tsiourvas2025causal,DBLP:journals/corr/abs-2607-08940,shirkavand2025costaware} 
or cascading \cite{aggarwal2024automix,zhang2024efficient,mahmood2026routing,dekoninck2025a,jitkrittum2023does,rabanser2026gatekeeper,narasimhan2025faster,gupta2024language}. Routing selects a single model for each query, whereas cascading sequentially invokes stronger models when needed.

\section{Query-wise Modality Selection}
As discussed in Section~1, extra modalities should be used only when their potential benefit justifies the additional inference cost. We therefore formulate query-wise modality selection for Omni-LLM inference as an accuracy--cost optimization problem and develop a cost-sensitive modality selector.

\subsection{Modality Selection Problem Formulation}
\label{sec:setup}
\paragraph{Data Generating Distribution} 
We consider the input-level modality selection problem for a fixed pretrained Omni-LLM, denote by $f$. 
Let $\bm{q}=(\bm{z}_1,\dots,\bm{z}_M)$ be an omni-modal input query, where $\bm{z}_m \in \mathcal{Z}_m$ is the input associated with modality $m\in \mathcal{M}:=[M]$, such as, text, image, audio, or video. 
Denote by $y\in \mathcal{Y}:=[K]$ the ground-truth answer for query $\bm{q}$.
We assume that the query-answer pair $(\bm{q},y)$ is drawn from an unknown joint distribution $\mathcal{D}$ over $\mathcal{Z}_1 \times \cdots \times \mathcal{Z}_M\times \mathcal{Y}$, and the density of this distribution is $p(\bm q, y)$.
We assume the training data set consists of $n$
i.i.d. samples drawn from $\mathcal{D}$.

\paragraph{Technical Notations} For a modality subset $S\subseteq\mathcal{M}$, let $\widetilde{\bm {q}}_{S}$ denote the version of query $\bm{q}$ restricted to the modalities indexed by $S$,
\begin{align}
    \widetilde{\bm q}_{S}
    :=
    \bm q\big|_{\mathcal S}
    =
    (\bm{z}_m)_{m\in S}.
\end{align}
Specifically, if
$S=\{1,3\}$, then $\widetilde{\bm q}_{S} = \bm (\bm{z}_1,\cdots,\bm{z}_M)\big|_{\mathcal \{1,3\}}= (\bm{z}_1,\bm{z}_3).$
For $(\bm q, y)$, we further define
\begin{align}
\text{Acc}_{S}^f(\bm{q}) :=
\indicator{f(\widetilde{\bm{q}}_S)=y}, \quad
\text{Cost}_{S}^f(\bm{q}) := \sum_{j\in S} c_f(\bm{z}_j)
\end{align}
to be the prediction correctness of Omni-LLM $f$ on the modality-restricted query $\widetilde{\bm{q}}_{S}$ and its corresponding inference cost, respectively. 
Here, $c_f(\bm z_j)$ denotes the inference cost incurred by $f$ when processing the modality input $\bm z_j$.
The inference cost is summed over the selected modalities
and may be measured in terms of input tokens, monetary cost, or inference
latency.
When $S=\mathcal M$, the setting reduces to standard full-modality inference, where all available modalities are provided to $f$.

\paragraph{Problem Setup} 
We construct a query-specific decision representation $\bm{h}(\bm{q})\in\mathcal{H}$ using the information available at the time of modality selection.
This design is flexible, allowing $\bm{h}(\bm{q})$ to incorporate different sources of information, such as text, metadata, or lightweight modality features.
A modality selector $r:\mathcal{H}\rightarrow 2^{\mathcal{M}}$ then maps $\bm{h}(\bm{q})$ to a modality subset $r(\bm{h}(\bm{q})) \subseteq \mathcal{M}$.

Our goal is to learn a modality selector $r$ that maximizes the expected prediction accuracy under an average cost budget $B>0$:
\begin{align}
\label{eq:constrained}
\max_{r}\quad
\mathbb{E}_{p(\bm{q},y)}[\text{Acc}_{r(\bm h (\bm q))}^{f}(\bm{q})]
\quad
\text{s.t.}\quad
\mathbb{E}_{p(\bm{q})}
[\text{Cost}_{r(\bm h (\bm q))}^{f}(\bm{q})] \leq B.
\end{align}

To characterize the optimal query-wise modality decision, we consider the Lagrangian relaxation of Eq.~\eqref{eq:constrained}. For a trade-off parameter $\lambda\geq 0$, this yields the following target risk:
\begin{align}
R_{\lambda}(r)
= \mathbb{E}_{p(\bm q,y)} 
\left[\indicator{f(\widetilde{\bm q}_{r(\bm h (\bm q))})\neq y}
+
\lambda \sum_{j\in r(\bm h (\bm q))} c_f(z_j)
\right]
\label{eq:target_risk}
\end{align}

\begin{lemma}[Bayes-Optimal Solution]
    \label{lem:bayes_optimal}
    Given the representation function $\bm{h}$, 
    the selector minimizing the target risk~\eqref{eq:target_risk} is given by
    \begin{equation}
        \label{eq:bayes_router}
       r^*(\bm{h}(\bm{q})) = 
        \am\limits_{S\subseteq \mathcal{M}}
          \bigl\{
          \text{Acc}_{S}^f(\bm{q}) - \lambda\, \text{Cost}_{S}^f(\bm{q})\bigr\},
    \end{equation}
    where ties are broken according to a fixed arbitrary rule.
    The corresponding optimal modality-restricted query is $\widetilde{\bm{q}}_{r^*(\bm{h}(\bm{q}))}=
    (\bm{z}_m)_{m\in r^*(\bm{h}(\bm{q}))}$.
\end{lemma}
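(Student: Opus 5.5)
The plan is the standard pointwise-minimization argument used for Bayes-optimal rules in learning to defer and routing. Since $r$ ranges over all (measurable) maps $\mathcal{H}\to 2^{\mathcal{M}}$ and the set $2^{\mathcal{M}}$ is finite, we can minimize the risk separately for each value of the decision representation. First condition on $\bm h(\bm q)$ and rewrite \eqref{eq:target_risk} by the tower property as
\begin{align*}
R_\lambda(r)=\mathbb{E}_{\bm h(\bm q)}\Big[\,\mathbb{E}\big[\indicator{f(\widetilde{\bm q}_{S})\neq y}+\lambda\,\text{Cost}^f_{S}(\bm q)\,\big|\,\bm h(\bm q)\big]\Big|_{S=r(\bm h(\bm q))}\Big].
\end{align*}
This step uses that $r$ depends on $(\bm q,y)$ only through $\bm h(\bm q)$. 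Because $r(\bm h(\bm q))$ then takes one of finitely many values, the inner expectation can be written as $\sum_{S}\indicator{r(\bm h(\bm q))=S}\,\Phi_S(\bm h(\bm q))$, where $\Phi_S$ denotes the conditional risk of choosing $S$.

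Next, note $\indicator{f(\widetilde{\bm q}_S)\neq y}=1-\text{Acc}^f_S(\bm q)$. This gives $\Phi_S=1-\mathbb{E}[\text{Acc}^f_S(\bm q)-\lambda\,\text{Cost}^f_S(\bm q)\mid \bm h(\bm q)]$. For every value of $\bm h$ and every $r$, $\sum_S \indicator{r=S}\Phi_S\ge \min_S\Phi_S$. Equality holds if $r$ selects a minimizer of $\Phi_S$, which is an argmax of the expected accuracy-minus-cost term. Integrating this pointwise bound over $\bm h(\bm q)$ shows that $R_\lambda(r)\ge R_\lambda(r^*)$ for every $r$. Measurability of $r^*$ is immediate from the finite argmax together with a fixed tie-breaking order, which is why the statement specifies one. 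The claim about the optimal restricted query then follows directly from the definition of $\widetilde{\bm q}_S$.

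The main obstacle is reconciling this derivation with the statement as written. Eq.~\eqref{eq:bayes_router} uses the realized $\text{Acc}^f_S(\bm q)$ and $\text{Cost}^f_S(\bm q)$, which depend on $y$ and on the full $\bm q$. My argument instead produces their conditional expectations given $\bm h(\bm q)$. The two coincide when $\bm h(\bm q)$ determines these quantities, for example when $\bm h$ is injective and $y$ is a function of $\bm q$. Otherwise the argmax should be read in the conditional-expectation sense. I would state the proof for the conditional version, $\am_S\,\mathbb{E}[\text{Acc}^f_S-\lambda\text{Cost}^f_S\mid\bm h(\bm q)]$, and remark that it reduces to \eqref{eq:bayes_router} under that determinism, interpreting the lemma's notation accordingly.
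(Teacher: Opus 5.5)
Your proof is correct, and it differs from the paper's in a way that matters. The paper conditions on the full query $\bm q$ rather than on $\bm h(\bm q)$. It rewrites $R_\lambda(r)$ as $\mathbb{E}_{\bm q}[1-\text{Acc}^f_{r(\bm h(\bm q))}(\bm q)+\lambda\,\text{Cost}^f_{r(\bm h(\bm q))}(\bm q)]$ and minimizes the integrand pointwise in $\bm q$, on the grounds that $r(\bm h(\bm q))$ is determined by $\bm q$. That gives Eq.~\eqref{eq:bayes_router} literally and in one line, but it tacitly uses the two assumptions you isolate. First, replacing $\mathbb{E}_{y\mid\bm q}[\indicator{f(\widetilde{\bm q}_S)=y}]$ by the realized $\text{Acc}^f_S(\bm q)$ requires correctness to be determined by $\bm q$ (e.g., $y$ a function of $\bm q$). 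Second, a pointwise-in-$\bm q$ minimizer is an admissible selector only if it depends on $\bm q$ through $\bm h(\bm q)$ alone (e.g., $\bm h$ injective). The paper's justification points the wrong way: what is needed is that the optimal choice is a function of $\bm h(\bm q)$, not that $r(\bm h(\bm q))$ is a function of $\bm q$.

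Conditioning on $\bm h(\bm q)$, as you do, respects the selector's information constraint. It gives the true Bayes rule $\am_S\,\mathbb{E}[\text{Acc}^f_S(\bm q)-\lambda\,\text{Cost}^f_S(\bm q)\mid\bm h(\bm q)]$ for arbitrary $\bm h$. This general case is the relevant one here, because the experiments use a text-only embedding that discards audio and video. You recover the paper's formula under determinism and also handle measurability, which the paper does not address. The paper's version buys only brevity and a formula that visibly matches the per-sample training labels $S^*_\lambda(\bm q_i)$.

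Your reading is also what the training procedure actually targets. With the two subsets TA and TAV, the population minimizer of the gap-weighted cross-entropy (over all score functions $\bm g$) picks the subset with the larger $\mathbb{E}[u^\lambda_S(\bm q)\mid\bm h(\bm q)]$. The reason is that $\mathbb{E}[w\,\indicator{S^*_\lambda=S}\mid\bm h]$ is the conditional expectation of the positive part (for TAV) or negative part (for TA) of $u^\lambda_{S_{\text{TAV}}}-u^\lambda_{S_{\text{TA}}}$, and these differ by $\mathbb{E}[u^\lambda_{S_{\text{TAV}}}-u^\lambda_{S_{\text{TA}}}\mid\bm h]$.
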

\begin{proof}
    We decompose the target risk by conditioning on $q$:
    \begin{align*}
        R_{\lambda}(r) &= \mathbb{E}_{\bm{q}}\left[\mathbb{E}_{ y \mid \bm{q}}\left[\indicator{f(\widetilde{q}_{r(\bm{h}(\bm{q}))}) \neq y}
        + \lambda\ \sum_{j\in r(\bm{h}(\bm{q}))}c_f(z_j) \right]\right] \\
       & = \mathbb{E}_{\bm q}\left[1-\text{Acc}_{r(\bm{h}(\bm{q}))}^f (\bm{q}) + \lambda\, \text{Cost}_{r(\bm{h}(\bm{q}))}^f (\bm{q})\right].
    \end{align*}
   Since $\bm h$ is fixed, the routing decision $r(\bm h(\bm q))$ is determined solely by $\bm q$. Thus, the integrand is minimized pointwise for each $\bm q$ by choosing $$r^* (\bm{h}(\bm{q})) = \amin_{S\subseteq \mathcal{M}} \{1-\text{Acc}_{S}^f (\bm{q}) + \lambda\, \text{Cost}_S^f  (\bm{q})\} = \am_{S\subseteq \mathcal{M}} \{\text{Acc}_S^f (\bm{q}) - \lambda\, \text{Cost}_S^f (\bm{q})\}.$$
\end{proof}
Here, $\lambda$ controls the trade-off between prediction performance and inference cost. 
The optimal rule~\eqref{eq:bayes_router} selects, for each query, the modality subset that achieves the best balance between these two quantities. 
Thus, the optimal subset is determined by its query-specific prediction performance and inference cost, rather than by an a priori preference for larger or smaller modality subsets.

\subsection{Cost-Sensitive Router}
For a new query, the prediction outcome and cost associated with each candidate modality subset are unknown before invoking $f$.
Thus, the optimal rule in Eq.~\eqref{eq:bayes_router} is not directly available at test time. 
We therefore learn a router $r$ that predicts the modality subset from the query representation $\bm h(\bm q)$, using the training set
$\{(
\bm q_i,
\{
\text{Acc}_{S}^f (\bm q_i),
\text{Cost}_{S}^f (\bm q_i)
\}_{S\subseteq\mathcal M}
)
\}_{i=1}^{n}$.

This is naturally a cost-sensitive classification problem, since different routing errors can have different consequences in terms of prediction accuracy and inference cost.
For a fixed $\lambda$, define the $\lambda$-utility of selecting $S$ as
$
    u^{\lambda}_{S}(\bm q)=\text{Acc}_S^f(\bm q)-\lambda\text{Cost}_S^f (\bm q).
$
Accordingly, the target label for a training query $\bm q_i$ is the modality subset
$
    S^*_{\lambda}(\bm{q}_i)=\am_{S\subseteq \mathcal{M}} u^{\lambda}_{S}(\bm{q}_i),
$
which is exactly the subset preferred by the optimal rule in Eq.~\eqref{eq:bayes_router}.
We further assign each query a weight $w_i$
, defined as the utility gap between the optimal subset and its best alternative:
\begin{align}
    w_i=u^{\lambda}_{S^*_\lambda (\bm q_i)}(\bm q_i)-\max\limits_{S\neq S^*_\lambda (\bm q_i)} u^{\lambda}_S(\bm q_i).
\end{align}
A larger $w_i$ indicates that a routing error has a greater impact on the accuracy--cost objective and should therefore receive greater emphasis during training.

Following the standard reduction of cost-sensitive classification to weighted classification \cite{zadrozny2003cost}, we learn the router using a weighted cross-entropy surrogate. Let $\bm g(\bm h(\bm q)) = (g_S(\bm h (\bm q)))_{S\subseteq \mathcal{M}}$ denote the router's score vector, indexed by the candidate modality subsets. We optimize
\begin{align}
      \widehat{\bm{g}}=\amin_{\bm g}\ \frac{1}{n} \sum_{i=1}^{n}\frac{w_i}{\bar w}\,
\ell_{\text{CE}}\bigl(\bm g(\bm h(\bm q_i)),\,S^*_i\bigr),
\quad  
\label{eq: cs_loss}
\end{align}
where $\bar  w=\tfrac1N\textstyle\sum_i w_i$ and 
$\ell_{\text{CE}}$ is the standard cross-entropy loss.
Thus, the router is trained to predict the optimal modality subset while placing greater emphasis on queries for which a routing error has a larger impact on the original accuracy--cost objective.

At test time, the prediction outcome and cost of the candidate subsets are no longer required. The learned router only takes $\bm h(\bm q)$ as input and selects the subset with the largest predicted score:
\begin{equation}
\widehat r\bigl(\bm h(\bm q)\bigr)
=
\am_{S\subseteq\mathcal M}
\widehat{g}_S\bigl(\bm h(\bm q)\bigr).
\label{eq:recover_router}
\end{equation} 
Since the target labels and sample weights both depend on $\lambda$, varying $\lambda$ produces routers corresponding to different accuracy--cost trade-offs and traces the corresponding operating curve.

\section{Experiments}
In this section, we evaluate our query-wise modality-selection framework on WorldSense to assess its empirical effectiveness. Additional experimental details and results are provided in the Appendix.

\subsection{Models and Dataset}
For the underlying Omni-LLM, we conduct experiments with Qwen2.5-Omni-7B~\cite{qwen25} and Qwen3-Omni-30B-A3B~\cite{Qwen3-Omni} treating each as the fixed underlying Omni-LLM $f$ and training a modality selector.
We evaluate on WorldSense~\cite{hong2025worldsense}, an omni-modal question-answering benchmark, where $M=3$ and each query $\bm q=(\bm z_1,\bm z_2,\bm z_3)$ consists of text, audio, and video.

Rather than considering all possible modality subsets, we focus on a simplified binary selection problem: given text and audio, should video be additionally provided for a particular query? 
This setting allows us to examine whether query-wise modality selection can already yield a meaningful accuracy--cost trade-off. Accordingly, we consider two modality subsets:
\begin{align}
    S_{\text{TA}} = \{1,2\}, 
    S_{\text{TAV}} = \{1,2,3\}.
\end{align}
Thus, the always-TA and always-TAV selectors are defined as
$r_{\text{TA}}(\cdot)\equiv S_{\text{TA}}$ and
$r_{\text{TAV}}(\cdot)\equiv S_{\text{TAV}}$.
We use a single linear layer as the modality selector, with the BGE-M3~\cite{chen2024bge} embedding of the textual input $z_1$ as the decision representation $\bm h$. Thus, the selector determines whether to include video using only textual information available before Omni-LLM inference.

\subsection{Evaluation Metrics}
\paragraph{Accuracy and Cost.} The system performance is evaluated using average accuracy and average cost over the test set:
\begin{align}
    \widehat{\text{Acc}}(r)=
    \frac{1}{n_{\text{test}}}
    \sum_{i=1}^{n_{\text{test}}} \text{Acc}_{r(\bm h (\bm q_i))}^f (\bm{q}_i),\quad
    \widehat{\text{Cost}}(r)=
    \frac{1}{n_{\text{test}}}
    \sum_{i=1}^{n_{\text{test}}} \text{Cost}_{r(\bm h (\bm q_i))}^f (\bm{q}_i).
\end{align}
We consider three cost measures: input tokens,
inference latency, and estimated monetary cost. 
The first two are directly
measured from model inference, while monetary cost is defined using modality-dependent token prices, with details provided in Appendix.

\paragraph{Accuracy--Cost AUC.}
For methods with tunable thresholds or hyperparameters, we sweep them to obtain accuracy--cost operating points and evaluate the area under the resulting Pareto curve over the common cost range from always-TA to always-TAV.
For each selector $r$, we construct $\widetilde{A}r(C)$ by linearly interpolating its Pareto-optimal points, anchored at the always-TA point. If the curve does not reach the always-TAV cost, we extend its final accuracy horizontally to $c{\text{TAV}}$. We then define the normalized AUC as
\begin{align}
\label{eq: AUC}
    \text{AUC}(r)
    =
    \frac{   \displaystyle
        \int_{c_{\text{TA}}}^{c_{\text{TAV}}}
        \left(
            \widetilde{A}_r(C)-a_{\text{TA}}
        \right)\,dC}{
        \left(c_{\text{TAV}}-c_{\text{TA}}\right)
        \left(a_{\text{TAV}}-a_{\text{TA}}\right)} ,
\end{align}
where $c_j= \widehat{\text{Cost}}(r_j), a_j = \widehat{\text{Acc}}(r_j), j\in\{\text{TA},\text{TAV}\}$.
A larger AUC indicates a better overall trade-off, 
assigning higher scores to methods that achieve larger accuracy gains at lower costs.

\paragraph{Cost Reduction at Always-TAV Accuracy.}
To quantify the efficiency gain from modality selection, we measure how much cost can be reduced while maintaining the same accuracy as always-TAV. 
For each method and each cost measure, we find the point on its accuracy--cost Pareto curve whose accuracy matches that of always-TAV and denote the corresponding cost by $c_r^{\text{match}}$.
The relative cost reduction is then computed as
\begin{align}
\label{eq: cost saving}
\Delta\% (r)= \frac{c_{\text{TAV}}-c_r^{\text{match}}}{c_\text{TAV}} \times 100.
\end{align}
A larger positive $\Delta\%$ indicates greater cost savings while maintaining the accuracy of full-modality inference.

\subsection{Routing and Cascade Baselines}
We consider two paradigms for query-wise modality selection: routing and cascading. 
Routing selects a modality configuration directly for each query, 
$
    r(\bm h(\bm q)) \in \{S_{\text{TA}},S_{\text{TAV}}\},
$
before invoking the Omni-LLM.
In contrast, a cascade first processes the query with TA and then decides whether to stop or escalate to TAV based on the intermediate model output. 
We adapt representative methods from both paradigms to our modality-selection setting.

\paragraph{Routing Baselines.}
All routing baselines make the TA-versus-TAV decision before Omni-LLM inference, but differ in how this decision is obtained.
\begin{itemize}
    \item \textbf{Random Router} selects TAV with probability $p$ and TA with probability $1-p$,
\begin{align}
r( \bm h (\bm q))=
\begin{cases}
S_{\text{TAV}}, & \text{with probability } p,\\
S_{\text{TA}}, & \text{with probability } 1-p.
\end{cases}
\end{align}
Sweeping $p\in[0,1]$ yields its accuracy--cost curve.
    
    \item \textbf{LLM-as-a-Judge Router.} This router uses an external LLM to predict whether video is needed for a given query. 
    For this baseline, we use the textual query $z_1$ as the decision representation, i.e., $\bm h(\bm q)=z_1$. 
    The judge outputs a binary label $J(\bm h(\bm q))\in\{0,1\}$, where $J(\bm h(\bm q))=1$ indicates that additional video information is needed. The routing rule is
    \begin{align}
    r(\bm h (\bm q))=
    \begin{cases}
    S_{\text{TAV}}, & \text{if } J(\bm h(q))=1,\\
    S_{\text{TA}}, & \text{otherwise}.
    \end{cases}
    \end{align}
    We evaluate Gemini 2.5 Pro \cite{comanici2025gemini} and Gemini 2.5 Flash-Lite \cite{comanici2025gemini} as the external judge. The judge prompt is provided in Appendix. 
    \item \textbf{Estimate Then Predict.} This learning-based baseline first estimates the quality and cost of each modality configuration from the query representation, and then selects the configuration with the highest estimated utility. Specifically, for each configuration $j\in\{\text{TA},\text{TAV}\}$, we learn predictors $\hat a_j(\bm h(\bm q))$ and $\hat c_j(\bm h(\bm q))$ from the observed accuracy and cost in the training set. At test time, the router evaluates the estimated utility of each configuration and selects
    \begin{align}
    \hat r(\bm h (\bm q))=
    \am_{j\in\{\text{TA},\text{TAV}\}}
    \left[
    \hat a_j(\bm{h}(\bm q))
    -\lambda \hat c_j (\bm{h}(\bm q))
    \right],
    \end{align}
    where $\lambda\geq 0$ controls the trade-off between quality and cost. Varying $\lambda$ produces the corresponding accuracy--cost trade-off curve.
\end{itemize}

\paragraph{Cascade Baselines.}
Cascade first processes every query with the cheaper TA configuration and then decides whether to escalate to TAV.
Let $\text{Conf}(\bm q)$ denote a confidence score obtained after the TA-stage inference. 
Given a threshold $\tau\in [0,1]$, the cascade stops at TA when the current prediction is sufficiently confident and otherwise escalates to TAV:
\begin{align}
   r(\bm h (\bm q))=
    \begin{cases}
    S_{\text{TAV}}, & \text{Conf}(\bm q)<\tau,\\
    S_{\text{TA}}, & \text{Conf}(\bm q)\ge \tau.
    \end{cases} 
\end{align}
For an escalated query, the Omni-LLM $f$ is first invoked with $S_{\mathrm{TA}}$ and then again with $S_{\mathrm{TAV}}$, resulting in the cumulative inference cost of both runs.
We consider two confidence measures. \textbf{Verbalized confidence} prompts the Omni-LLM to report a confidence score with its TA prediction and is applicable to both black-box and white-box models. 
\textbf{Logit-based confidence} uses the model's output logits over the answer options and is therefore limited to white-box Omni-LLMs. Sweeping $\tau$ yields the corresponding accuracy--cost trade-off curve.

\begin{figure}[tbp]
    \begin{subfigure}[t]{0.33\linewidth}
        \centering
        \includegraphics[width=\linewidth]{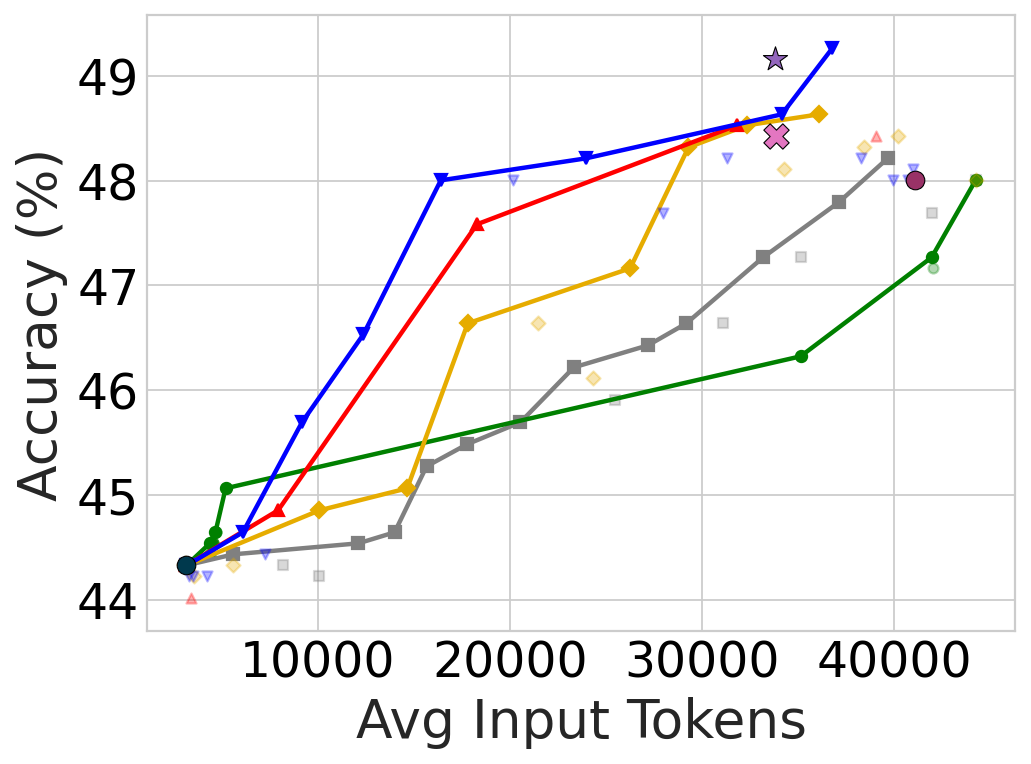}
        \captionsetup{font=footnotesize}
        \caption{Qwen2.5-Omni: Input Tokens}
    \end{subfigure}
    \begin{subfigure}[t]{0.33\linewidth}
        \centering
        \includegraphics[width=\linewidth]{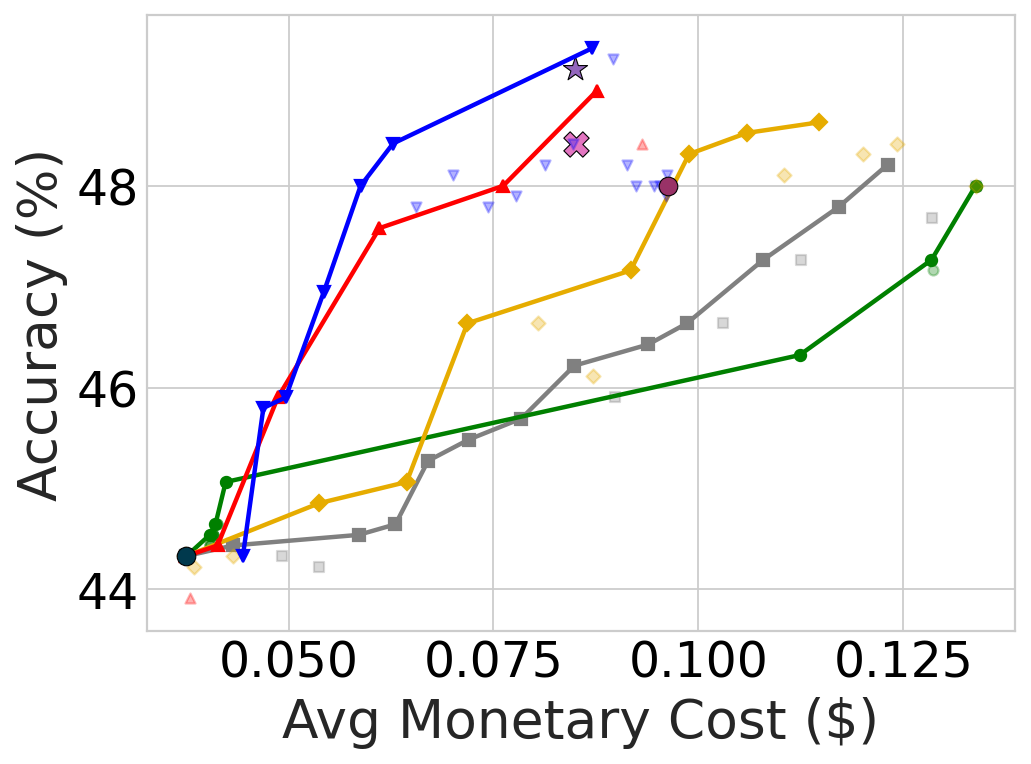}
        \captionsetup{font=footnotesize}
        \caption{Qwen2.5-Omni: Monetary}
    \end{subfigure}
    \begin{subfigure}[t]{0.33\linewidth}
        \centering
        \includegraphics[width=\linewidth]{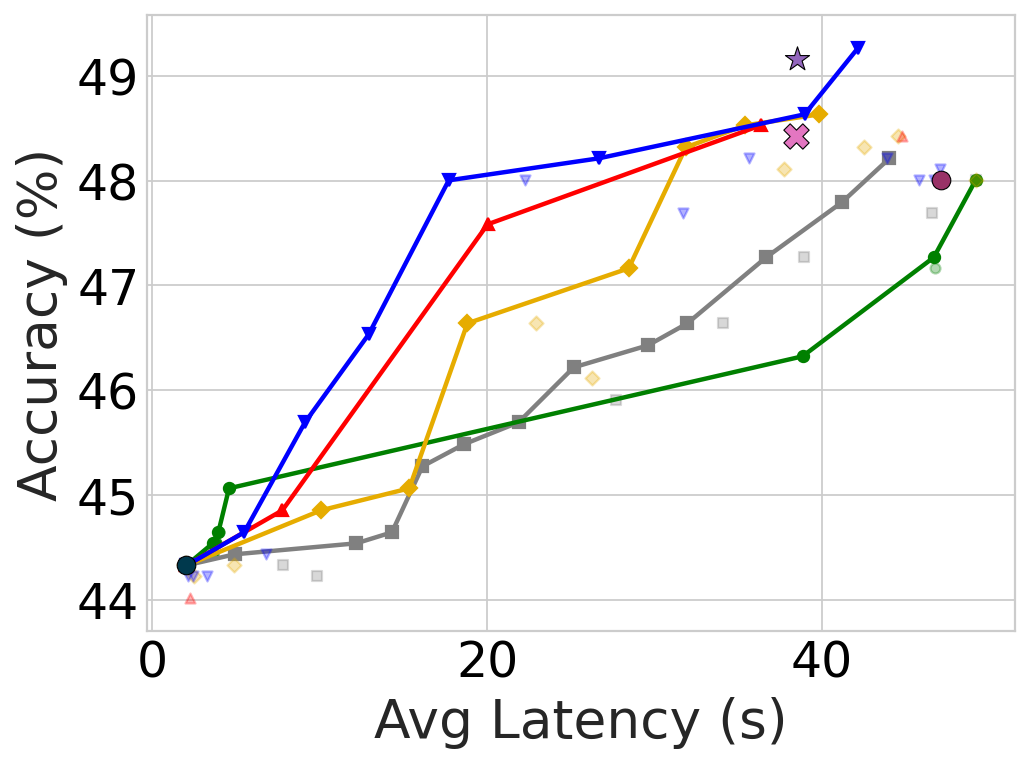}
        \captionsetup{font=footnotesize}
        \caption{Qwen2.5-Omni: Latency}
    \end{subfigure}
    \\
        \begin{subfigure}[t]{0.33\linewidth}
        \centering
        \includegraphics[width=\linewidth]{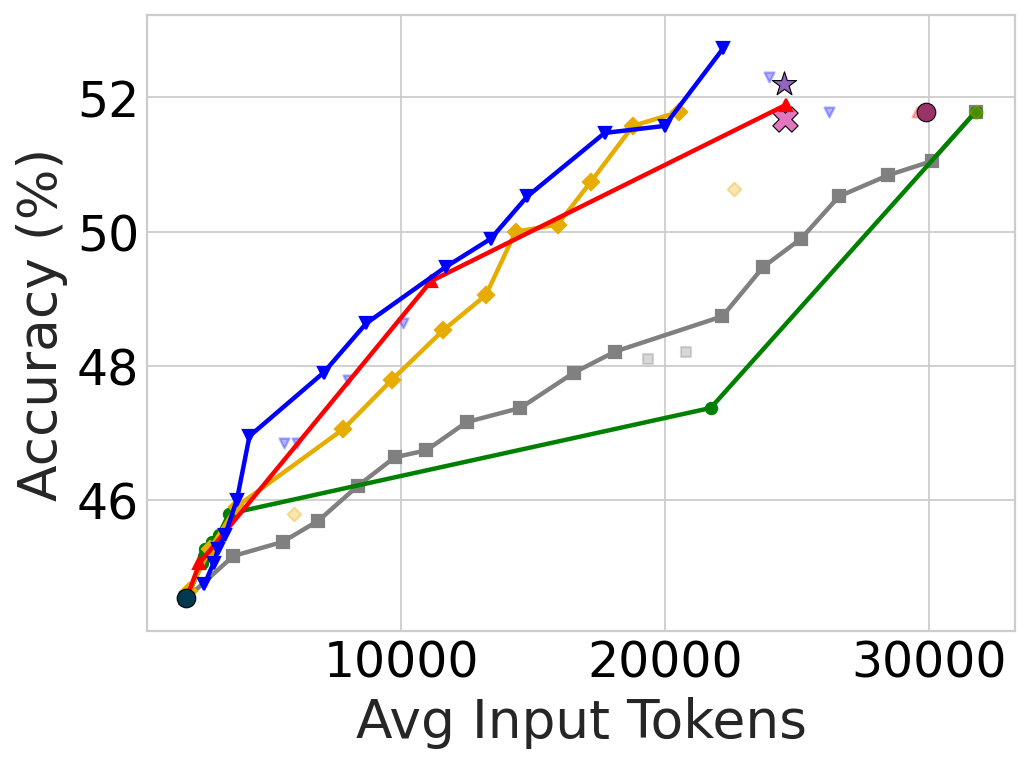}
        \captionsetup{font=footnotesize}
        \caption{Qwen3-Omni: Input Tokens}
    \end{subfigure}
    \begin{subfigure}[t]{0.33\linewidth}
        \centering
        \includegraphics[width=\linewidth]{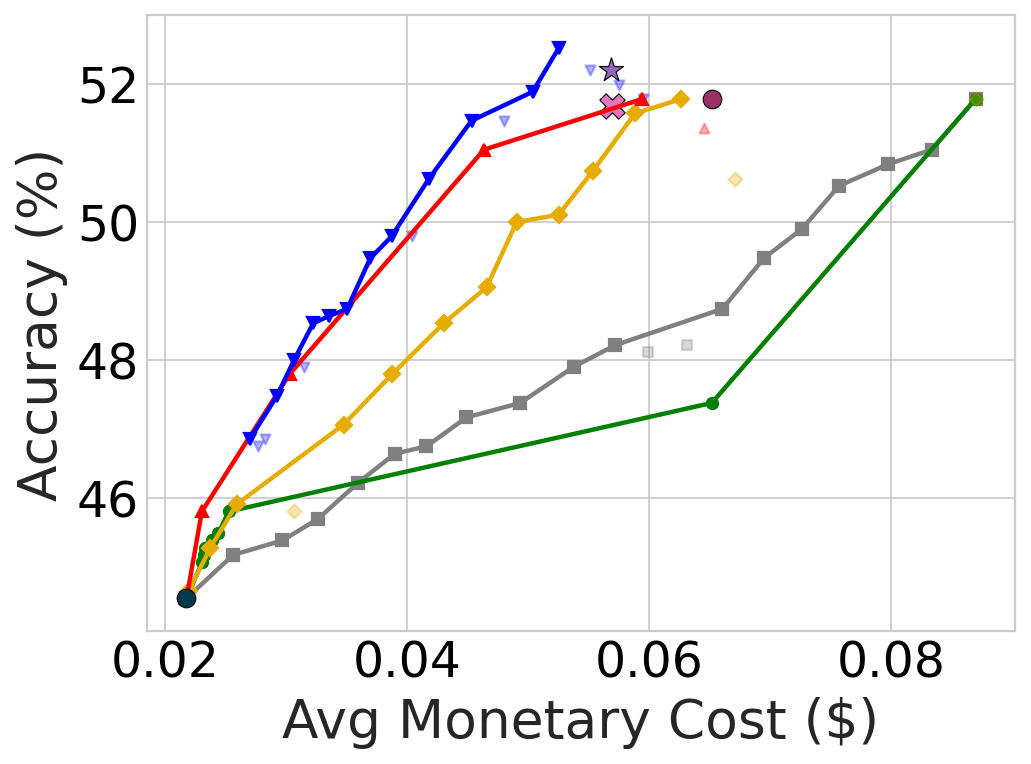}
        \captionsetup{font=footnotesize}
        \caption{Qwen3-Omni: Monetary}
    \end{subfigure}
    \begin{subfigure}[t]{0.33\linewidth}
        \centering
        \includegraphics[width=\linewidth]{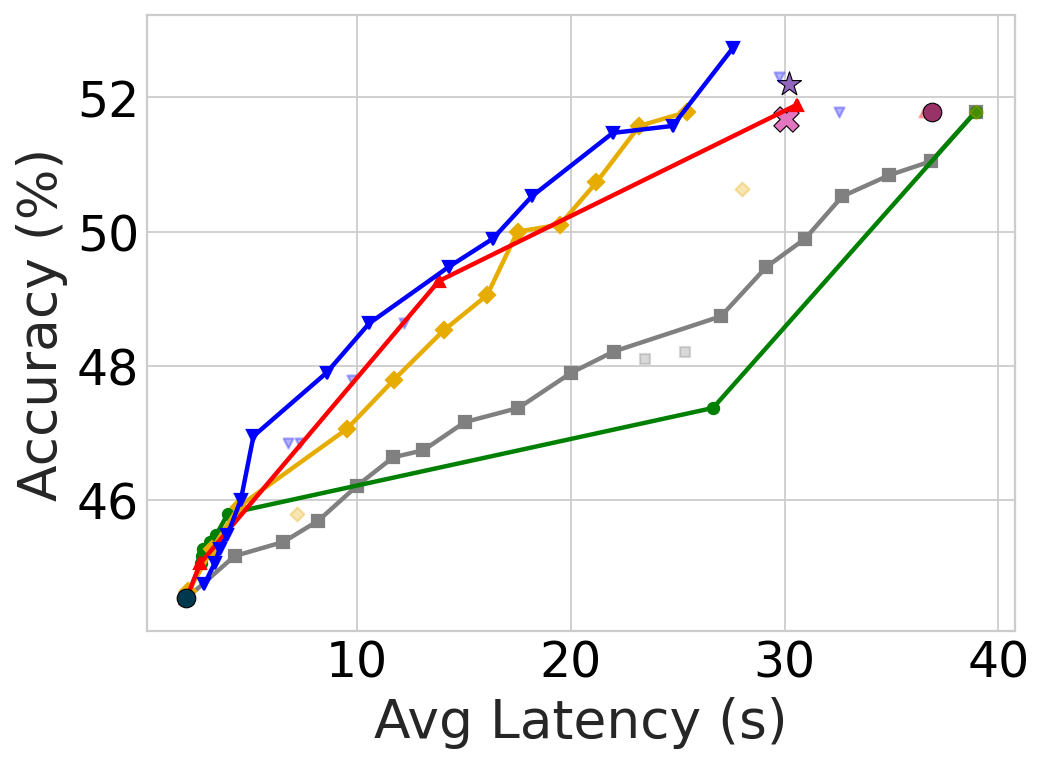}
        \captionsetup{font=footnotesize}
        \caption{Qwen3-Omni: Latency}
    \end{subfigure}
    \\
    \begin{subfigure}[t]{\linewidth}
    \centering
        \includegraphics[width=0.9\linewidth]{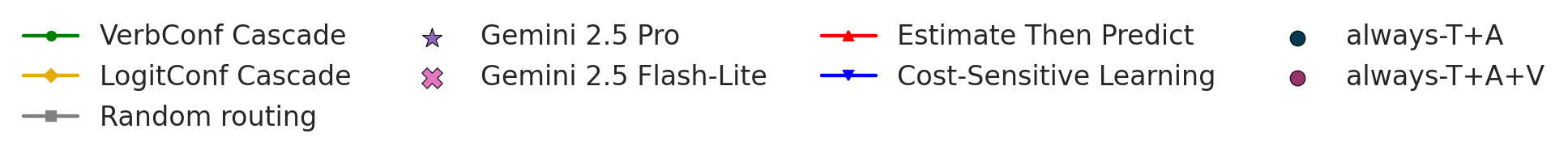}
    \end{subfigure}
\caption{
Accuracy--cost trade-off for query-wise modality selection on WorldSense using Qwen2.5-Omni (top) and Qwen3-Omni (bottom).
The three columns measure cost by average input tokens, monetary cost, and latency per query, respectively. 
Semi-transparent markers denote operating points obtained by sweeping each method's decision threshold or hyperparameter, while solid lines show the corresponding Pareto frontiers. 
Filled circles denote the Always-TA and Always-TAV endpoints, and random routing serves as a non-adaptive reference. 
}
\label{Acc-Cost Pareto Frontiers}
\end{figure}

\subsection{Experimental Results}
\paragraph{Accuracy--Cost Tradeoff}
In Figure~\ref{Acc-Cost Pareto Frontiers}, we report the accuracy--cost trade-offs of different modality-selection methods on WorldSense using Qwen2.5-Omni and Qwen3-Omni under three cost measures: input tokens, monetary cost, and latency. 
Cost-Sensitive Learning achieves the best overall trade-off across both Omni-LLMs and all three cost measures, attaining higher accuracy at lower cost over most of the operating range. Estimate-Then-Predict also consistently improves over the cascade baselines. The LLM-as-a-Judge routers provide competitive single operating points, but are generally less efficient than the learning-based routers. Among the cascades, logit-based confidence yields moderate gains, whereas verbalized confidence performs close to or below random routing over much of the cost range.

In Table~\ref{tab: AUC}, we further report the AUC score defined in Eq.~\eqref{eq: AUC}. 
Consistent with the accuracy--cost curves in Figure~\ref{Acc-Cost Pareto Frontiers}, cost-sensitive learning achieves the highest AUC under all three cost measures for both Qwen2.5-Omni and Qwen3-Omni.
Specifically, its AUCs are 0.9265/0.9125/0.9878 on Qwen2.5-Omni and 0.8670/0.8587/0.9422 on Qwen3-Omni for input tokens, latency, and monetary cost, respectively.
The second-best baseline varies across two models: Estimate-Then-Predict ranks second under all three costs on Qwen2.5-Omni, while on Qwen3-Omni, LogitConf Cascade performs better for token and latency, and Estimate-Then-Predict remains stronger for monetary cost. 
Overall, cost-sensitive learning maintains a clear advantage across all six model-cost settings.

\begin{table}[tbp]
\centering
\caption{Normalized accuracy--cost AUC of different modality-selection methods across three cost measures on WorldSense.}
\label{tab: AUC}
\begin{tabular}{lcccccccc}
\toprule
Model & \multicolumn{3}{c}{Qwen2.5-Omni} & \multicolumn{3}{c}{Qwen3-Omni}\\
\midrule
Method &  TokenIn & Latency & Money&  TokenIn & Latency & Money \\
\midrule
{Random Router}&0.5046& 0.4987& 0.4920&0.4986&0.4985& 0.4985 \\ 
VerbConf Cascade &0.4355 & 0.4327& 0.4965&0.4110&0.4193& 0.4195 \\
LogitConf Cascade &0.6963 & 0.6967& 0.6043&0.7143&0.7108& 0.7133\\
Estimate Then Predict & 0.8178 &0.8047& 0.8992&0.6941&0.6860& 0.7895\\
Cost-Sensitive Learning&\textbf{0.9265} &\textbf{0.9125}& \textbf{0.9878}&\textbf{0.8670}&\textbf{0.8587}&\textbf{0.9422}\\
\bottomrule
\end{tabular}
\end{table}

\begin{table}[tbp]
\centering
\caption{Cost reduction at Always-TAV accuracy.
For each method, we report the input-token, monetary, and latency costs at which its accuracy--cost Pareto frontiers  reach the Always-TAV accuracy (48.00\% for Qwen2.5-Omni; 51.79\% for Qwen3-Omni). Costs are averaged over test queries, and $\Delta$\%~\eqref{eq: cost saving} denotes the relative cost reduction compared with Always-TAV.}
\label{tab: cost saving}
\begin{tabular}{llcccccc}
\toprule
Model & Method & TokenIn & $\Delta$\%  & Latency & $\Delta$\% & Money& $\Delta$\% \\ 
\midrule
\multirow{5}{*}{Qwen2.5}&Always-TAV &  41050 &0.00 & 47.13 &0.00 & 0.0963 &0.00\\
&VerbConf Cascade  & 44224 &-7.73& 49.17 & -4.33 & 0.1338 & -38.94  \\
&LogitConf Cascade & 29289 &28.65 & 31.81 & 32.51 & 0.0999 & -3.74\\
&Estimate Then Predict& 28099 & 31.55  &    31.74 & 32.65 & 0.0762 & 20.87 \\
&Cost-Sensitive Learning & \textbf{16877} & \textbf{58.89} & \textbf{18.35} & \textbf{61.07} & \textbf{0.0588} & \textbf{38.94} \\
\midrule
\multirow{5}{*}{Qwen3}& Always-TAV & 29874&0.00 & 36.89 &0.00 & 0.0652 & 0.00\\
&VerbConf Cascade & 31744 &-6.26& 38.93 &-5.53 & 0.0870 & -33.44\\
&LogitConf Cascade  & 20527&31.29 & 25.46 &30.98 & 0.0626& 3.99 \\
&Estimate Then Predict &26159&12.44 &32.40& 12.17 &0.0594& 8.90 \\
&Cost-Sensitive Learning & \textbf{19502} &\textbf{34.72} & \textbf{24.14} & \textbf{34.56} & \textbf{0.0490}& \textbf{24.85}\\
\bottomrule
\end{tabular}
\end{table}

\paragraph{Cost Reduction at Always-TAV accuracy.}
We further compare the methods at the accuracy of Always-TAV, measuring how much inference cost can be reduced while maintaining full-modality accuracy. 
As shown in Table \ref{tab: cost saving}, Cost-Sensitive Router achieves the lowest cost under all three measures for both Qwen2.5-Omni and Qwen3-Omni. 
On Qwen2.5-Omni, it reduces input tokens, latency, and monetary cost by 58.89\%, 61.07\%, and 38.94\%, respectively; on Qwen3-Omni, the corresponding reductions are 34.72\%, 34.56\%, and 24.85\%. 
Estimate-Then-Predict also achieves savings across all three measures on both models, although the gains are smaller on Qwen3-Omni. 
The logit-based cascade reduces token and latency costs on both models, but its monetary cost slightly increases on Qwen2.5-Omni. 
In contrast, the verbalized-confidence cascade is more expensive than Always-TAV across all three measures for both models, suggesting that its confidence estimates do not provide reliable escalation decisions and that the cumulative cost of cascading further increases inference cost.

\section{Conclusion}
We studied query-wise modality selection for Omni-LLMs, where the modality configuration is selected for each query to avoid unnecessary use of costly modalities. We formulated this problem as a constrained accuracy--cost optimization problem and developed a cost-sensitive learning approach for modality selection. Experiments on WorldSense with Qwen2.5-Omni and Qwen3-Omni show that our method consistently achieves favorable accuracy--cost trade-offs across input-token, latency, and monetary costs, while matching full TAV accuracy at substantially lower inference cost.
These results show that query-wise modality selection is an effective approach to reducing Omni-LLM inference cost while preserving accuracy.

\section*{Acknowledgements}
The project is partially supported by National Research Foundation, Singapore, under the NRF RSS Scheme NRF-RSS2022-009 and the Singapore Global AI VP grant AIVP-2024-002. Yuzhou Cao is supported by Google PhD Fellowship program.

\bibliographystyle{abbrv}

\appendix

\section{Details of Experiments}
\subsection{Models and Dataset}
Both backbones are frozen and queried in direct-prediction format: greedy decoding (\texttt{do\_sample=False}) with a 16-token thinker budget and at most 32 new tokens, so a response is a single option letter. Qwen2.5-Omni-7B runs in \texttt{bfloat16} and Qwen3-Omni-30B-A3B in \texttt{auto} precision.
Each model is evaluated on a single NVIDIA H200 GPU.
Video frame rate and frame budget are left at each model's own processor default rather than forced to a common value, since aligning raw frame counts across two visual tokenisers would put at least one model outside its training regime; audio is resampled to 16\,kHz for both. 
The prompt names the attached evidence in a single header line (\emph{``\dots based on the given audio''} vs.\ \emph{``\dots based on the given video (with audio)''}).

The query text is formed by concatenating the question with all four answer options and is encoded once by a frozen \texttt{BAAI/bge-m3} encoder into a 1024-dimensional, $\ell_2$-normalized dense vector.
The learning-based router $r$ is a single linear layer trained with AdamW (learning rate $1e-4$, weight decay $3e-3$) for 100 epochs with batch size 64.
The accuracy--cost curve is obtained by sweeping the cost weight over 100 points in $[0,1]$. 
We use a $70:30$ train--test split, and all methods are evaluated on the same test set for a consistent comparison.

\subsection{Evaluation Metric}
\paragraph{Monetary cost.} 
The number of input tokens serves as a pricing-agnostic proxy for inference cost, but treats tokens from different modalities equally.
In practice, commercial omni-modal APIs may assign different prices to text, audio, and video tokens. We therefore additionally evaluate each method using an estimated monetary cost that accounts for modality-specific pricing. Let
\begin{align}
   \bm{\pi}=(\pi_{\text{T}},\pi_{\text{A}},\pi_{\text{V}})
\end{align}
denote the per-token prices for text, audio, and video, respectively. For a query $\bm q$, the monetary cost under selector $r$ is
\begin{align}
    \widetilde{C}(\bm{q};r) 
    =
    \pi_{\text{T}}n(\bm{q},\text{T})
    +
    \pi_{\text{A}}n(\bm{q},\text{A})
    +
    \mathbb{I}\!\left\{r(\bm{q})=S_{\text{TAV}}\right\}
    \pi_{\text{V}}n(\bm{q},\text{V}),
\end{align}
where $n(\bm{q},m)$ denotes the number of input tokens from modality $m$.
In our experiments, we use the 
Alibaba Cloud Model Studio international-service pricing for
Qwen3.5-Omni-Plus,
$\bm{\pi}=(1.40, 11.00, 1.40)\times 10^{-6}$ USD per input token for text, audio, and video, respectively, and report the average monetary cost over the test set.

\subsection{Routing and Cascade Baselines}

\paragraph{LLM-as-a-Judge Router.} We prompt an external LLM to predict, based only on the question and answer options, whether visual information is required. Its binary prediction is directly used to route a fixed target Omni-LLM between TA and TAV inference.

We study whether an off-the-shelf LLM can serve as a question-only router for an omni-modal cascade. Given only the multiple-choice question and its answer options, the judge predicts whether the sample can be answered by the cheap TA or requires visual information from the full TAV. The judge does not see the video, audio, metadata, or ground-truth answer; thus, it estimates the \textbf{modality requirement} of a question rather than judging answer correctness. Its binary decision is then used as a hard routing signal: label 0 sends the sample to the text+audio stage (TA), while label 1 defers it to the text+audio+video stage (TAV).

To examine how routing quality depends on judge capability and cost, we compare two LLM judges with different profiles: Gemini 2.5 Pro and Gemini 2.5 Flash Lite. Both judges use the same instruction format, decoding setting, and structured binary output.

\begin{tcolorbox}[
    colback=gray!3,
    colframe=black!70,
    boxrule=1.2pt,
    arc=2mm,
    left=6mm,
    right=6mm,
    top=4mm,
    bottom=4mm,
    width=\linewidth
]
\begin{Verbatim}[fontsize=\small,baselinestretch=1.05]
You are an annotator. Given a multiple-choice question about a video,
decide whether the question can be answered from text + audio alone 
(speech, sound, music) or whether it REQUIRES the visual content of 
the video (e.g., objects on screen, colors, spatial relations, 
gestures, scene appearance, on-screen text, visual events).

You see only the question and the answer options. You do NOT see the
video, the audio, any metadata, or the correct answer.

Output exactly one binary label:
0  = text + audio SUFFICIENT (no visual frames needed)
1  = video REQUIRED (visual frames necessary to answer)

Return ONLY a JSON object with fields: label (0 or 1) and reason 
(<= 20 words).
\end{Verbatim}
\end{tcolorbox}

\paragraph{Verbalized Cascade} Elicitation prompt:
\begin{tcolorbox}[
    colback=gray!3,
    colframe=black!70,
    boxrule=1.2pt,
    arc=2mm,
    left=6mm,
    right=6mm,
    top=4mm,
    bottom=4mm,
    width=\linewidth
]
\begin{Verbatim}[fontsize=\small,baselinestretch=1.05]
  Please answer the following question: <...>
  Please choose only one answer from the following options:
    A. ...
    B. ...
    C. ...
    D. ...
    
  First output the letter (A, B, C, or D) only, then on the next 
  line output 'Confidence: N' where N is an integer 0-100 expressing 
  how sure you are.
\end{Verbatim}
\end{tcolorbox}

\paragraph{Logit-based Cascade}

We further use the model's first-token logits as a confidence signal for cascade 
routing. 
For each multiple-choice question, we prompt the model to output one answer letter from
$\{A,B,C,D\}$. During generation, we take the logit vector from the first decoding step, since the first generated token is expected to be the answer letter.

Let $l_A,l_B,l_C,l_D$ be the logits assigned to the four answer tokens $A,B,C,D$ at this
first decoding step. We convert these four logits into probabilities by applying a softmax
only over the answer options:
\[
p(i \mid \bm{x})
=
\frac{\exp(l_i)}
{\sum_{j \in \{A,B,C,D\}} \exp(l_j)},
\quad i \in \{A,B,C,D\}.
\]
The logit-based confidence is then defined as the largest probability among the four options:
\[
\text{conf}_{\text{logit}}(\bm{q})
=
\max_{a \in \{A,B,C,D\}} p(a \mid \bm{q}).
\]
In cascade, if this confidence is higher than a threshold $\tau$, we accept the prediction
from the cheaper TA stage. Otherwise, we defer the sample to the more expensive
TAV stage. Therefore, the threshold $\tau$ controls the trade-off between cost and accuracy.

\begin{table}[t]
\centering
\caption{Summary of the modality-selection methods evaluated in our experiments.}
\label{tab:method_summary}
\begin{tabular}{lcccc}
\toprule
Method & Paradigm & $\bm h(\bm q)$ & Training & Additional Requirement \\
\midrule
Oracle
&Routing & Omni-LLM correctness & No & Ground-truth\\
Random Router
& Routing & Random score & No & None \\

Task-Type Router
& Routing & Task metadata & No & Metadata \\

Estimate then Predict
& Routing & Text embedding & Yes &Text Encoder\\

Cost-Sensitive Router
& Routing & Text embedding & Yes &Text Encoder \\

LLM-as-a-Judge
& Routing & External LLM judgment & No & External LLM \\

Verbalized Cascade
& Cascade & Self-reported confidence & No & None \\

Logit-Based Cascade
& Cascade & Prediction logits & No & Logit access \\
\bottomrule
\end{tabular}
\end{table}

\section{Additional Experimental Results}
\subsection{Potential of Query-wise Modality Selection}
\paragraph{Oracle Selector}To quantify the potential of query-wise modality selection, we consider an oracle that chooses between TA and TAV for each query using the ground-truth answer. 
For each query--answer pair $(\bm q,y)$, the oracle
evaluates all candidate modality subsets and selects
\begin{align}
\label{eq: oracle}
    S^{\text{oracle}}_{\lambda}(\bm q,y)
    \in
    \am\limits_{S\subseteq \mathcal{M}}
    [\text{Acc}_{S}^f (\bm{q})-
        \lambda \text{Cost}_{S}^f (\bm{q})].
\end{align}
On WorldSense with Qwen2.5-Omni-7B, the oracle achieves a maximum $\widehat{\text{Acc}}$ of $59.03\%$ at a $\widehat{\text{Cost}}$ $8,970$ input tokens per query. Compared with always-TAV, the oracle improves accuracy by $11.03$ percentage points while reducing input-token cost by $78.1\%$, indicating substantial headroom for query-wise modality selection. The full oracle accuracy--cost frontier is provided in the Figure~\ref{fig:oracle_frontier}.

\paragraph{A Simple Task-Type Router.}
WorldSense groups its 26 task types into three task domains:
\begin{itemize}
\item \textbf{Recognition:} Action Counting, Anomaly Recognition, Attribute Recognition, Audio Counting, Audio Recognition, Audio Source Localization, Event Recognition, Object Counting, Object Existence Recognition, Scene Recognition, and Temporal Localization.
\item \textbf{Reasoning:} Attribute Reasoning, Audio Change, Causal Reasoning, Emotion Change, Hallucination, Object State Change, Relation Reasoning, and Temporal Prediction.
\item \textbf{Understanding:} Event Sorting, Human Emotions, Human Interaction, Human-object Interaction, Spatial Relation, Text and Diagram Understanding, and Video Emotions.
\end{itemize}

To illustrate both the potential of query-wise modality selection and the flexibility of the decision representation, we construct a router using only task-type metadata. 
WorldSense assigns each query to one of 26 fine-grained task types under 
three domains: Recognition, Understanding, and Reasoning. 
We rank the task types by the empirical accuracy gain of TAV over
TA. For each cutoff $k$, the router assigns TAV to queries whose
$\bm{h}^{\text{task}}(\bm{q})$ belongs to the top-$k$ task types and T+A otherwise.
\begin{align}
    r(\bm{h}^{\text{task}}(\bm{q}))=
    \begin{cases}
     S_{\text{TAV}}, \text{ if } \bm{h}^{\text{task}} (\bm{q}) \text{ is among the top-$k$ task types},\\
     S_{\text{TA}}, \text{ otherwise}.
\end{cases}
\end{align}
Sweeping $k$ from $0$ to $26$ yields the accuracy--cost curve in Figure~\ref{fig:task_type_curve}.
Despite its simplicity, the task-type
router achieves a meaningful accuracy--cost improvement on the test set.

\begin{figure}
\centering
    \begin{subfigure}[t]{0.35\linewidth}
        \centering
        \includegraphics[width=\linewidth]{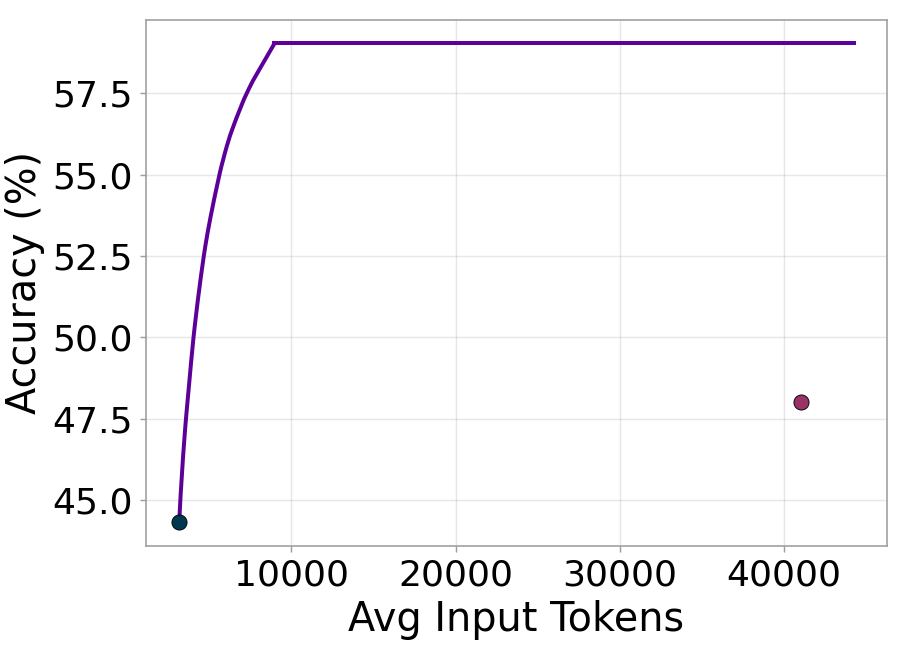}
        \caption{Oracle Router}
        \label{fig:oracle_frontier}
    \end{subfigure}
    \begin{subfigure}[t]{0.35\linewidth}
        \centering
        \includegraphics[width=\linewidth]{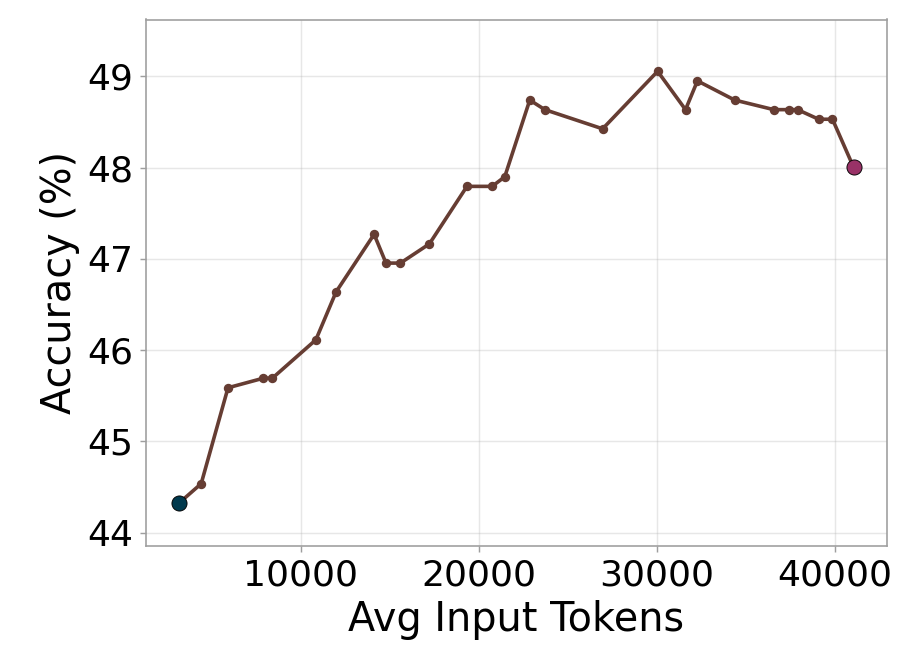}
        \caption{Task-Type Router}
        \label{fig:task_type_curve}
    \end{subfigure}
    \begin{subfigure}[t]{0.18\linewidth}
        \centering
        \raisebox{0.9cm}{
        \includegraphics[width=\linewidth]{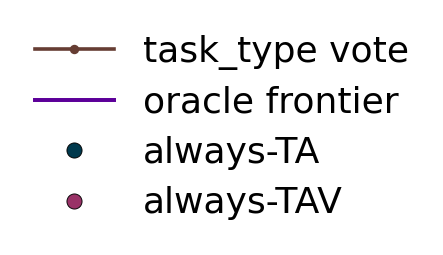}
        }
    \end{subfigure}
    \caption{Potential of query-wise modality selection on WorldSense using Qwen2.5-Omni.
(a) Accuracy--cost frontier of an oracle selector that chooses between T+A and TAV using ground-truth query outcomes.
(b) A simple task-type router that selects the modality configuration based only on the query's task type. The Always-TA and Always-TAV points are shown as reference configurations, and cost is measured by average input tokens per query.}
\end{figure}

\section{Limitations and Future Directions}
Our current study considers a simplified binary selection between
$S_{\mathrm{TA}}$ and $S_{\mathrm{TAV}}$, and focuses on a limited set of query
representations $\bm h(\bm q)$, including task-type metadata and text embeddings.
Natural future directions include extending modality selection to a larger set of candidate subsets and exploring richer multimodal representations, such as fused text, audio, and video embeddings or lightweight cross-modal signals.

Our modality-selection framework is also complementary to existing model-selection approaches, including LLM routing and cascading. A promising direction is therefore to jointly optimize both decisions, which model to invoke and which modalities to provide, for more efficient omni-modal question answering.

\end{document}